\PassOptionsToPackage{table}{xcolor}
\documentclass[11pt]{article}

\usepackage[preprint]{acl}

\usepackage{times}
\usepackage{latexsym}
\usepackage[T1]{fontenc}
\usepackage[utf8]{inputenc}
\usepackage{microtype}
\usepackage{inconsolata}
\usepackage{graphicx}

\usepackage{amsmath,amsfonts,bm}

\def\eqref#1{equation~\ref{#1}}

\def\1{\bm{1}}

\DeclareMathAlphabet{\mathsfit}{\encodingdefault}{\sfdefault}{m}{sl}
\SetMathAlphabet{\mathsfit}{bold}{\encodingdefault}{\sfdefault}{bx}{n}

\newcommand{\KL}{D_{\mathrm{KL}}}

\usepackage{url}
\usepackage{amsmath,amssymb,amsfonts,amsthm}
\usepackage{booktabs}
\usepackage{multirow}
\usepackage{array}
\usepackage{xcolor}
\usepackage{colortbl}
\usepackage{enumitem}
\usepackage{caption}
\usepackage{wrapfig}
\usepackage{tikz}
\usepackage{float}
\usepackage{algpseudocode}
\floatstyle{ruled}
\newfloat{algorithm}{t}{loa}
\floatname{algorithm}{Algorithm}

\usetikzlibrary{arrows.meta,positioning,fit,backgrounds,calc,decorations.pathreplacing}

\newcommand{\method}{\textsc{CoFiRe}}
\newcommand{\methodlong}{Confident First, Reflective Second}
\newcommand{\ctr}{\operatorname{ctr}}
\renewcommand{\KL}{D_{\mathrm{KL}}}
\providecommand{\CE}{\mathrm{CE}}
\providecommand{\Ind}{\mathbb{I}}
\providecommand{\Exp}{\mathbb{E}}
\newcommand{\trust}{\textsc{Trust}}
\newcommand{\review}{\textsc{Review}}
\newcommand{\seek}{\textsc{Seek}}
\newcommand{\multihop}{\textsc{Multi-Hop}}
\newcommand{\fix}{\textsc{fix}}
\newcommand{\brk}{\textsc{break}}
\newcommand{\cw}{\textsc{cw}}
\newcommand{\Spre}{S_{\mathrm{pre}}}
\newcommand{\Sgate}{S_{\mathrm{gate}}}
\newcommand{\Srev}{S_{\mathrm{rev}}}
\newcommand{\Acc}{\mathrm{Acc}}

\newsavebox{\resulttablebox}
\newcommand{\fitresulttable}[1]{%
  \sbox{\resulttablebox}{#1}%
  \ifdim\wd\resulttablebox>\linewidth
    \resizebox{\linewidth}{!}{\usebox{\resulttablebox}}%
  \else
    \usebox{\resulttablebox}%
  \fi
}

\newtheorem{proposition}{Proposition}
\newtheorem{observation}{Observation}

\title{Confident First, Reflective Second: Reinforcement Learning of Confidence-Preserving Reflection for Video Understanding}

\author{%
  Zihao Lin$^{1}$ \quad
  Jingyuan Liu$^{2}$ \quad
  Yucheng Zhang$^{2}$ \quad
  Yuguang Yao$^{3}$ \quad
  Lifu Huang$^{1}$ \\[4pt]
  $^{1}$University of California, Davis \quad
  $^{2}$Independent Researcher \quad
  $^{3}$Intuit \\
}

\begin{document}
\maketitle

\begin{abstract}
Video models answer from sampled frames and may miss the decisive moment.
A common remedy is to commit to an answer, re-examine the video, and confirm or revise it, a loop now mainly learned with reinforcement learning (RL).
We find that RL erodes the initial answer, a phenomenon we term \emph{confidence erosion}, where confidence means the label-verified reliability of the initial answer rather than self-reported certainty: because errors can be repaired later, a correct initial answer earns less reward, and the initial answer stalls even as the final answer's gain over it rises.
We propose the principle \emph{confident first, reflective second}: the initial answer must stay reliable on its own, above a floor set by a reference policy, and reflection may only add to it.
Our method, \method{}, implements this principle with one mechanism for each erosion channel: reward incentives, parameter drift, and gradient interference.
\method{} reaches 67.0\% mean accuracy on Video-MME, MLVU, and LongVideoBench, 1.7 points above the strongest compared 7B method, and improves initial-answer and final accuracy over outcome-reward RL by 4.1 and 3.7 points on average: a stronger initial answer carries through to a stronger final one.
\end{abstract}

\section{Introduction}

Understanding long-form videos requires reasoning over visual evidence that is often incomplete, sparse, or difficult to localize.
Under compute and context limits, multimodal large language models process only a subset of frames and may overlook decisive moments, miss fine-grained details, or misinterpret temporal relationships \citep{fu2025videomme,zhou2025mlvu,wu2024longvideobench}.
Recent video agents therefore give an initial answer, retrieve or re-examine relevant segments, and verify or revise the answer based on new evidence \citep{wang2024videoagent,pan2025timesearchr,madaan2023selfrefine,reflectr1}.
Reinforcement learning (RL) has emerged as a promising way to learn when to seek more evidence and how to use it \citep{kumar2025score,wang2025vlrethinker,reflectr1}.
Unlike reflection within a single reasoning trace \citep{guo2025deepseekr1,feng2025videor1}, an explicit answer-and-revision process exposes both the initial and final answers, so their correctness and the outcome of each revision can be evaluated separately.

\begin{figure}[t]
\centering
\begin{tikzpicture}[x=0.74cm,y=1.5cm,font=\scriptsize]
\draw[gray!45,dashed] (0,1.8)--(1.8,0);
\draw[gray!45,dashed] (1.8,2.0)--(3.8,0) node[pos=0,above,text=gray]{Final 60};
\draw[gray!45,dashed] (3.8,2.0)--(5.8,0) node[pos=0,above,text=gray]{Final 62};
\draw[gray!45,dashed] (5.8,2.0)--(7.8,0) node[pos=0,above,text=gray]{Final 64};
\draw[gray!70,densely dotted] (0,0.8)--(8,0.8);
\draw[->] (0,0)--(8.2,0);
\draw[->] (0,0)--(0,2.15);
\foreach \x/\l in {1/58,3/60,5/62,7/64}{\draw (\x,0)--(\x,-0.04) node[below]{\l};}
\foreach \y/\l in {0.3/{$-0.5$},0.8/{$0.0$},1.3/{$0.5$},1.8/{$1.0$}}{\draw (0,\y)--(-0.08,\y) node[left]{\l};}
\node at (4.1,-0.36) {Initial-answer accuracy $\Acc_{\mathrm{pre}}$ (\%)};
\node[rotate=90] at (-1.2,1.0) {Reflection gain $\Delta$ (points)};
\foreach \px/\py in {4.0/1.4,5.3/0.9,5.4/0.6,5.3/0.8,5.0/1.1,6.2/1.0,6.5/1.3}{\fill[gray!55] (\px,\py) circle (1.1pt);}
\fill[black] (0.7,0.3) circle (1.7pt) node[right]{Zero-shot};
\fill[black] (3.4,0.5) circle (1.7pt) node[below]{SFT};
\fill[orange!85!black] (3.3,1.3) circle (1.9pt) node[left]{Outcome RL};
\fill[red!75!black] (7.2,1.1) circle (2.3pt) node[above,xshift=-20pt]{\textbf{\method{} (ours)}};
\draw[-{Stealth[length=4pt]},orange!85!black,thick] (3.4,0.56)--(3.32,1.22);
\draw[-{Stealth[length=4pt]},red!75!black,thick] (3.5,0.51)--(7.08,1.08);
\end{tikzpicture}
\vspace{-2mm}
\caption{\textbf{Reflection gain is not reflective progress.} Video-MME policies by initial-answer accuracy and reflection gain $\Delta$; dashed lines are iso-lines of final accuracy. From the same SFT policy, Outcome RL moves \emph{up} (more gain, same initial answer), while \method{} moves \emph{right} to a higher final accuracy. Gray dots: configurations from Table~\ref{tab:ablation_full}.}
\label{fig:plane}
\end{figure}

Existing reflective objectives mainly reward final-answer correctness or improvement through revision, leaving open how optimizing reflection affects initial-answer accuracy.
Answering and evidence-conditioned revision are distinct capabilities.
The former reasons over available observations, whereas the latter must identify missing or conflicting evidence and judge whether to revise.
Reflection can repair initial mistakes but is imperfect: evidence may be unavailable, retrieval may miss relevant moments, and new observations can be misinterpreted or even overturn a correct initial answer.
A robust reflective agent should therefore learn not only to correct mistakes but also to answer accurately before correction is needed.

We identify an imbalance between initial-answer learning and correction in reflective RL.
As illustrated in Figure~\ref{fig:plane}, outcome-based RL (Outcome RL) raises the reflection gain on Video-MME from $-0.3$ to $+0.5$ points while initial-answer accuracy stays nearly unchanged (60.4\% $\rightarrow$ 60.3\%), and similar patterns emerge on MLVU and LongVideoBench.
A larger reflection gain thus need not indicate a stronger system and may instead reflect greater dependence on correction.
We refer to this imbalance, particularly when accompanied by degradation of the initial answer, as \emph{confidence erosion}, where \emph{confidence} denotes the label-verified reliability of the initial answer rather than self-reported certainty.

This imbalance arises from two challenges in optimizing a shared reflective policy.
First, outcome-based rewards can obscure the contribution of the initial answer (\emph{reward incentives}), since an incorrect initial answer that is later repaired can receive the same credit as a correct one that remains unchanged.
As we show in Section~\ref{sec:erosion}, under fixed repair and break rates, successful correction reduces the marginal value of initial-answer accuracy, and correction bonuses can further favor repaired trajectories.
Stage-specific credit can separate the two \citep{kumar2025score,reflectr1}, but reward attribution alone does not resolve the problem.
Second, answering and reflecting share parameters, so updates that improve retrieval or revision may alter reliable initial-answer decisions (\emph{parameter drift}), and reflective gradients may conflict with those improving the initial answer (\emph{gradient interference}).

We therefore propose \method{} (\methodlong{}), which protects initial-answer accuracy while encouraging beneficial revision, rather than treating reflection gain as an end in itself.
We formalize this principle as maximizing initial-answer reliability and reflective utility subject to a \emph{confidence floor} relative to a frozen reference policy, and approximately optimize it with three mechanisms.
(1)~\emph{Initial-answer credit} rewards the initial answer for its own correctness, separately rewards corrections and penalizes destructive revisions, and raises its weight through an adaptive multiplier when initial-answer accuracy falls below the floor.
(2)~\emph{Verified-decision anchoring} consolidates reference decisions verified as correct through label-guided supervision and decision-local regularization, stabilizing reliable predictions while allowing incorrect ones to improve.
(3)~\emph{Asymmetric gradient protection} prioritizes the protected objective whenever the initial-answer and reflective gradients conflict.

We evaluate \method{} with Qwen2.5-VL-7B \citep{bai2025qwen25vl} on Video-MME \citep{fu2025videomme}, MLVU \citep{zhou2025mlvu}, and LongVideoBench \citep{wu2024longvideobench}.
Under identical initialization and inference, \method{} improves initial-answer and final accuracy over Outcome RL by 4.1 and 3.7 points on average, so its initial-answer gains carry through to the final answer, and reflection still helps on all three benchmarks.
It also reaches 67.0\% average accuracy, 1.7 points above the strongest compared 7B method.
Ablations show that the three mechanisms contribute complementarily, with notable gains on long videos.
Reflective learning should thus be judged not only by how well it corrects mistakes but also by whether it improves the answers that correction starts from.

Our main contributions are threefold:
\begin{itemize}[leftmargin=*,nosep]
\item \textbf{Understanding the limitations of outcome-driven reflective learning.} We show that gains from reflective RL can concentrate in revision while initial-answer accuracy stays largely unchanged, and analyze how reward attribution and shared parameters cause this imbalance.
\item \textbf{A unified framework for reliable reflective optimization.} We introduce \method{}, which combines three mechanisms to improve both the initial answer and its correction.
\item \textbf{Empirical validation on video understanding.} Experiments on three video benchmarks show gains in both initial-answer and final accuracy, supported by ablations and analyses.
\end{itemize}

\section{Related Work}\label{sec:related}

\paragraph{Answering and re-examining video.}
Outcome-reward RL with group-relative advantages is the standard recipe for eliciting reasoning in LLMs \citep{shao2024deepseekmath,guo2025deepseekr1} and multimodal models \citep{huang2025visionr1}.
For video, Video-R1 rewards temporal reasoning \citep{feng2025videor1} and VideoChat-R1 reinforces spatio-temporal perception \citep{li2025videochatr1}, but both answer in a single pass.
A second line lets the model look again: training-free agents seek question-relevant evidence \citep{wang2024videoagent,wang2025videotree,ye2025tstar}, TimeSearch-R interleaves temporal search with reasoning under a self-verification reward \citep{pan2025timesearchr}, and Reflect-R1 arbitrates an intuitive answer against retrieved evidence \citep{reflectr1}.
Most trained variants reward only the \emph{final} answer, which cannot distinguish an initially correct answer from a repaired one and reduces the initial answer to a draft. Stage-decoupled credit (Reflect-R1) removes this discount but still trains the initial answer only as an input to revision. We instead make initial-answer accuracy an explicit, protected training target.

\paragraph{Learning to revise.}
Self-Refine and Reflexion improve outputs with generated feedback \citep{madaan2023selfrefine,shinn2023reflexion}, yet intrinsic self-correction fails without reliable feedback \citep{huang2024cannot}.
SCoRe and S$^2$R train self-correction with RL \citep{kumar2025score,ma2025s2r}, and SCoRe adds a first-attempt KL penalty and a progress reward. R$^3$V, VL-Rethinker, SRPO, and ReCoT extend reflection to vision-language models \citep{cheng2025r3v,wang2025vlrethinker,wan2025srpo,qu2025recot}, and Reflect-R1 additionally penalizes overturning correct answers \citep{reflectr1}.
These methods optimize whether the \emph{revision} improves or preserves an answer, whereas we show that this objective erodes the \emph{initial answer}, an effect that neither per-rollout revision penalties nor fixed first-attempt regularizers correct.
Appendix~\ref{app:related} relates \method{} to constrained RL, continual learning, and gradient surgery.

\section{Confidence Erosion in Learning to Re-examine}\label{sec:erosion}

\subsection{Answering and Re-examining a Video}\label{sec:setup}

\begin{figure*}[t]
\centering
\includegraphics[width=\textwidth]{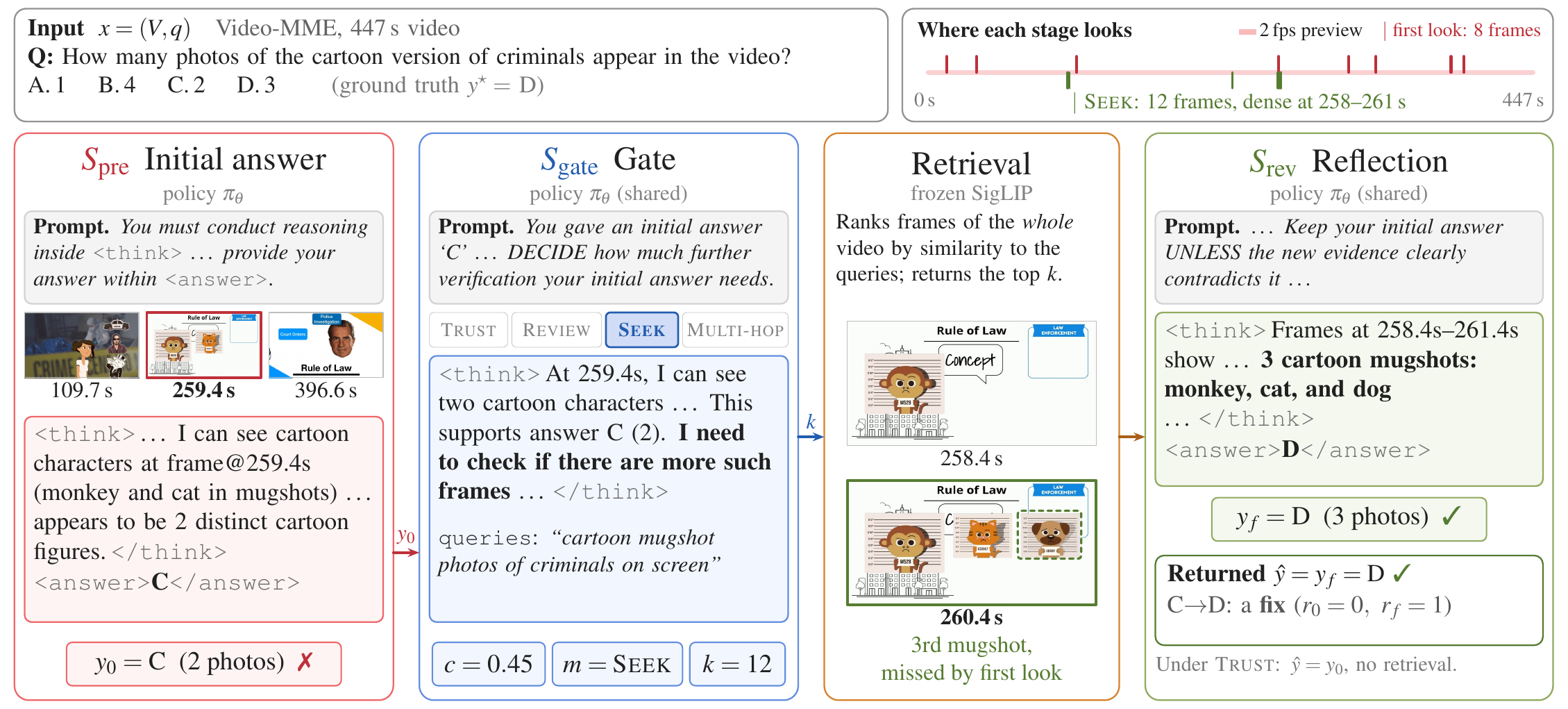}
\caption{\textbf{Answering and re-examining a Video-MME example.} From a 2\,fps preview and an eight-frame first look showing only two mugshots, $\Spre$ answers $y_0=$ C. $\Sgate$ reports low confidence and chooses \seek{}; frames at 258--261\,s reveal a third mugshot, and $\Srev$ revises to $y_f=$ D, a \fix{}. More examples are in Appendix~\ref{app:cases}.}
\label{fig:example}
\end{figure*}

Let $x=(V,q)$ be a video-question input with answer options and correct option $y^\star$. As illustrated in Figure~\ref{fig:example}, a policy $\pi_\theta$ with shared parameters $\theta$ runs three stages. The initial-answer stage $\Spre$ observes a uniform 2\,fps preview of the whole video, whose fixed visual budget renders long videos at low resolution. It therefore requests a \emph{first look} of up to eight higher-resolution frames, selected for query relevance and temporal diversity, and commits to an initial answer $y_0$.

The gate $\Sgate$ inspects the first-look frames, reports a confidence score, and selects a mode $m$: \trust{} (return $y_0$), \review{} (re-examine the first look), \seek{} (retrieve once), or \multihop{} (up to three retrieval rounds), together with retrieval queries and a frame count when needed. A frozen SigLIP encoder \citep{zhai2023siglip} ranks frames from a 2\,fps cache of the whole video by query similarity and option-discriminating scores, never returning a frame twice. The post-answer stage $\Srev$ then takes the initial-answer reasoning, the first-look and retrieved frames, and a frame-derived option belief (a distribution over options from SigLIP frame--option similarity), and produces a candidate answer $y_f$, which may confirm $y_0$. The preview is dropped after $\Spre$ to bound the context.

At inference, the returned answer is $\hat y=y_0$ under \trust{} and $\hat y=y_f$ otherwise; during training, $y_f$ is generated in every mode. We refer to $\Sgate$ and $\Srev$ together as \emph{reflection}. Because $y_0$ is explicitly committed, the task label alone determines whether the initial answer is correct and whether reflection repairs or preserves it, without step-level annotation.

\subsection{Why Reflective Training Erodes Confidence}\label{sec:metrics}\label{sec:incentive}\label{sec:why}

Let $r_0$ and $r_f$ indicate whether $y_0$ and $y_f$ are correct. A \fix{} repairs an error ($r_0=0$, $r_f=1$), a \brk{} overturns a correct answer ($r_0=1$, $r_f=0$), and a \cw{} switches between incorrect options. We report initial-answer accuracy $\Acc_{\mathrm{pre}}=\Exp[r_0]$, final accuracy $\Acc(\hat y)$, and reflection gain $\Delta=\Acc(\hat y)-\Acc_{\mathrm{pre}}$. Let $a=\Exp[r_0]$, and let $\rho$ and $\delta$ be the rates at which reflection repairs incorrect and breaks correct initial answers.
\begin{observation}[Reflection discounts the initial answer]\label{prop:hazard}
Since $\Pr[\fix]=\rho(1-a)$ and $\Pr[\brk]=\delta a$,
\begin{equation}
\begin{aligned}
\Exp[r_f]&=\rho+(1-\rho-\delta)\,a,\\
\Exp[r_f-r_0]&=\rho-(\rho+\delta)\,a.
\end{aligned}
\label{eq:hazard}
\end{equation}
\end{observation}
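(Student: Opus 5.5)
The plan is to condition on the initial outcome $r_0$ and apply the law of total probability, which is all Observation~\ref{prop:hazard} needs. The repair and break rates are read as conditional probabilities, $\rho=\Pr[r_f=1\mid r_0=0]$ and $\delta=\Pr[r_f=0\mid r_0=1]$. The statement treats them as fixed, that is, not depending on $a$, which is the modeling assumption the observation relies on. With $\Pr[r_0=1]=a$, the product rule gives $\Pr[\fix]=\Pr[r_0=0]\,\rho=\rho(1-a)$ and $\Pr[\brk]=\Pr[r_0=1]\,\delta=\delta a$, matching the premise. The \cw{} event has $r_0=r_f=0$, so it contributes nothing to either expectation and needs no separate treatment.

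For the first identity, I will decompose $\Exp[r_f]=\Pr[r_f=1]$ over the two values of $r_0$. On $\{r_0=1\}$ the answer survives with probability $1-\delta$, contributing $a(1-\delta)$. On $\{r_0=0\}$ it is repaired with probability $\rho$, contributing $(1-a)\rho$. Summing gives $\rho+(1-\rho-\delta)a$. Equivalently, $\Exp[r_f]=a-\Pr[\brk]+\Pr[\fix]$.

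For the second identity, I will subtract $\Exp[r_0]=a$ using linearity of expectation, which gives $\rho-(\rho+\delta)a$. The same result follows directly from $r_f-r_0=\Ind[\fix]-\Ind[\brk]$, since the difference is $+1$ on a \fix{}, $-1$ on a \brk{}, and $0$ otherwise. Either route yields $\rho(1-a)-\delta a$.

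No step is a real obstacle; the argument is a two-line computation. The only delicate point is interpretive. The identities hold exactly in expectation for any joint law, provided $\rho$ and $\delta$ are defined as conditional rates over the evaluation distribution. The paper's discounting reading, that the marginal value $\partial\Exp[r_f]/\partial a=1-\rho-\delta$ is below $1$, additionally requires holding $\rho,\delta$ fixed as $a$ varies. I would state that assumption explicitly rather than claim it follows from the algebra.
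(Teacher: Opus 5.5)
Your proposal is correct and follows the paper's own (implicit) derivation. That derivation writes $\Exp[r_f]=a+\Pr[\fix]-\Pr[\brk]$ and $r_f-r_0=\Ind[\fix]-\Ind[\brk]$, reads $\rho,\delta$ as conditional repair and break rates, and relies on your caveat that the discounting reading $\partial\Exp[r_f]/\partial a=1-\rho-\delta$ requires $\rho,\delta$ held fixed, which the paper states as ``for fixed reflective behavior.''
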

For fixed reflective behavior, $\partial\Exp[r_f]/\partial a=1-\rho-\delta$. An outcome reward thus values a correct initial answer at only $1-\rho-\delta$, so the better the reflection, the weaker the pressure to be right. An improvement reward even values it \emph{negatively}, paying more for an error that is later fixed than for an answer that is correct throughout. Since reflection gain decreases in $a$, selecting policies by reflection gain also selects for erosion.
Figure~\ref{fig:plane} confirms this prediction: Outcome RL turns a negative reflection gain positive while initial-answer accuracy stays flat, and Table~\ref{tab:main} shows the same pattern on all three benchmarks. The policy learns a better fallback, not a better answer.
\paragraph{Drift and interference.}
Routing credit by stage \citep{kumar2025score,reflectr1} removes this discount but not erosion, because answering and reflecting share parameters. Reflection updates can oppose the initial-answer gradient (\emph{interference}) and shift answers on unsampled questions (\emph{drift}), and neither a fixed reward weight nor the model's self-reported confidence (Section~\ref{sec:analysis}) can detect these effects.

\section{\methodlong{}}\label{sec:method}
Figure~\ref{fig:rwr_framework} gives an overview of \method{}. Section~\ref{sec:objective} formalizes the \emph{confident first, reflective second} principle as a constrained objective, and Sections~\ref{sec:credit}--\ref{sec:protect} close the incentive, drift, and interference channels in turn.

\begin{figure*}[t]
\centering
\includegraphics[width=\textwidth]{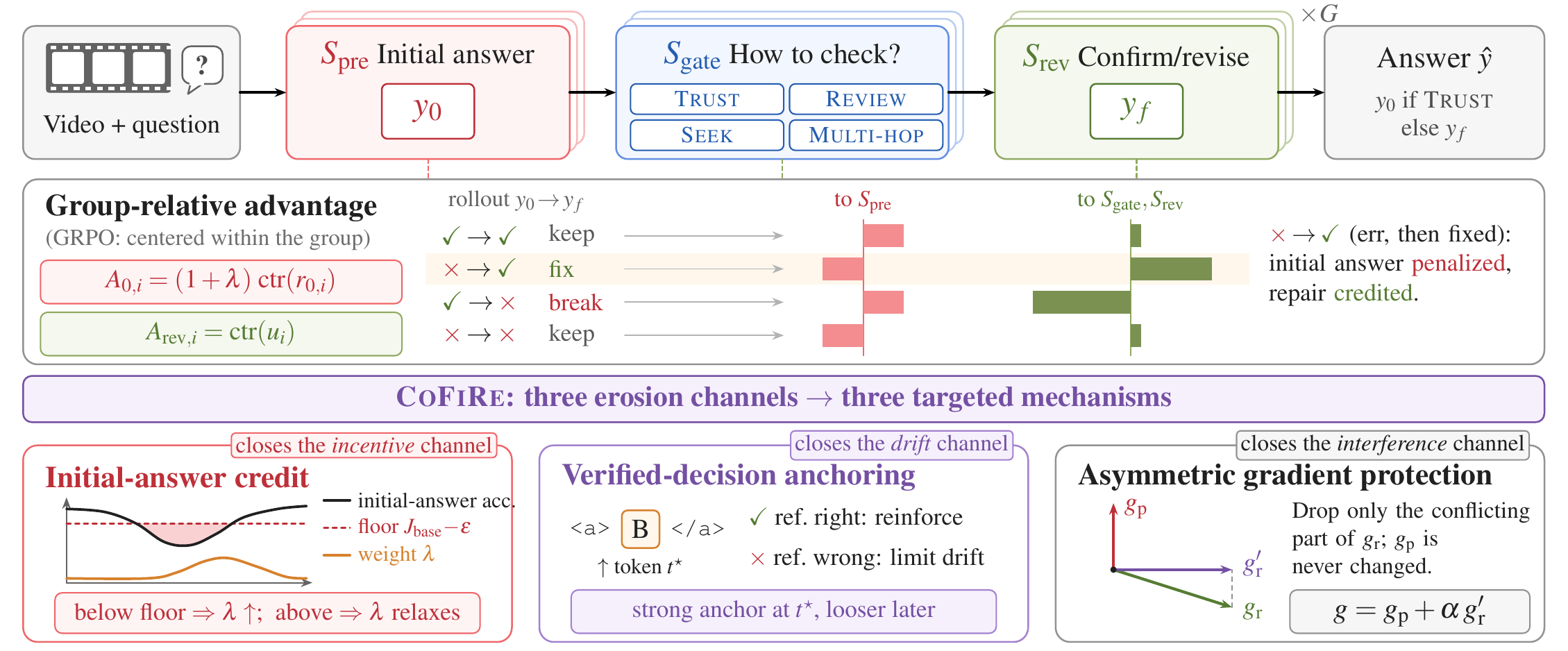}
\caption{\textbf{Overview of \method{}.} \emph{Top:} $G$ rollouts of one policy answer ($\Spre$), choose how to check ($\Sgate$), and confirm or revise ($\Srev$). \emph{Middle:} stage-routed advantages penalize the initial answer of an err-then-fix rollout but credit the repair. \emph{Bottom:} one mechanism per erosion channel: $\lambda$ rises below the initial-answer accuracy floor (incentive), verified reference answers anchor the decision token (drift), and only the conflicting part of the reflection gradient is removed (interference).}
\label{fig:rwr_framework}
\end{figure*}

\subsection{Confident First, Reflective Second}\label{sec:objective}

A frozen pretrained model $\pi_{\mathrm{base}}$ provides reference answers, and a frozen copy of the supervised initialization, $\pi_{\mathrm{ref}}$, provides KL targets. For each training input $x$, we cache the direct reference answer $y_b$ and its correctness $r_b=\Ind[y_b=y^\star]$, which separate verified reference decisions from reference errors.

Let $\mathcal D_w$ denote the training distribution over labeled inputs, including reference-error reweighting when enabled. Define initial-answer reliability $J_0(\theta)=\Exp[r_0]$, reference reliability $J_{\mathrm{base}}=\Exp[r_b]$, and reflective utility $J_{\mathrm{rev}}(\theta)=\Exp[u]$, where $u$ is the transition utility in Equation~\ref{eq:utility}. Expectations are over inputs from $\mathcal D_w$ and, for $J_0$ and $J_{\mathrm{rev}}$, over trajectories from $\pi_\theta$.

\method{} is built on the constrained objective
\begin{equation}
\begin{aligned}
\max_\theta\quad &J_0(\theta)+\alpha J_{\mathrm{rev}}(\theta)\\
\text{subject to}\quad &J_0(\theta)\ge J_{\mathrm{base}}-\epsilon,
\end{aligned}
\label{eq:constrained}
\end{equation}
where $\alpha\ge0$ weights reflection and $\epsilon\ge0$ is the permitted shortfall relative to the reference ($\epsilon=0.10$). We call the constraint the \emph{confidence floor}.

Equation~\ref{eq:constrained} differs from outcome-driven reflective RL in two ways. First, initial-answer reliability enters as a separate term, so reflection is never credited for the value of the answer. Second, it enters as a floor, so the policy cannot trade it for reflective utility, however profitable reflection becomes. Reflection gain $\Delta$ is deliberately absent, since by Observation~\ref{prop:hazard} optimizing it would reward erosion. We optimize Equation~\ref{eq:constrained} approximately with stage-routed policy surrogates (Appendix~\ref{app:impl}).

\subsection{Initial-Answer Credit under a Confidence Floor}\label{sec:credit}

\paragraph{Credit the initial answer only for itself.}
For each question, sample $G$ trajectories indexed by $i=1,\ldots,G$. For a scalar trajectory signal $z_i$, let the within-group centering be $\ctr(z_i)=z_i-\frac{1}{G}\sum_{j=1}^{G}z_j$. Let $\lambda\ge0$ be an adaptive floor multiplier. We define
\begin{align}
A_{0,i}&=(1+\lambda)\ctr(r_{0,i}),\label{eq:a1}\\
u_i&=\Ind[\fix_i]-\beta\Ind[\brk_i]-\gamma\Ind[\cw_i],\label{eq:utility}\\
A_{\mathrm{rev},i}&=\ctr(u_i).\nonumber
\end{align}
We assign $A_{0,i}$ to initial-answer tokens and $A_{\mathrm{rev},i}$ to gate and post-answer tokens, and apply $\alpha$ when combining the stage gradients (Section~\ref{sec:protect}).
Under this routing, reflection no longer pays for initial-answer errors. A trajectory that errs and is later fixed gives its initial answer a below-average advantage, while its reflective stages are credited for the repair. In the notation of Observation~\ref{prop:hazard}, the initial answer is valued at $1+\lambda$ per unit of accuracy, independent of $\rho$ and $\delta$.
The transition utility is asymmetric. With $\beta=1.5$ and $\gamma=0.2$, a fix earns $+1$, a break costs $-1.5$, and a change between incorrect options costs $-0.2$, so reflection learns to respect a reliable initial answer rather than compete with it. Format rewards and a mode-dependent effort cost further encourage valid and economical verification (Appendix~\ref{app:impl}).

\paragraph{Raise the weight when reliability slips.}
Routed credit removes the explicit discount, but drift and interference can still erode the initial answer, and a fixed credit weight cannot react. We therefore adapt the weight through the Lagrangian of Equation~\ref{eq:constrained},
\begin{equation}
\begin{aligned}
\mathcal J_\lambda(\theta)
={}&(1+\lambda)J_0(\theta)+\alpha J_{\mathrm{rev}}(\theta)\\
&-\lambda(J_{\mathrm{base}}-\epsilon),
\end{aligned}
\label{eq:lagrangian}
\end{equation}
which yields the weight $1+\lambda$ in Equation~\ref{eq:a1}. With $\widehat J_0$ and $\widehat J_{\mathrm{base}}$ denoting moving-average estimates of initial-answer and cached reference correctness on training batches (Appendix~\ref{app:accuracy_estimation}), we update the multiplier by projected dual ascent,
\begin{equation}
\lambda\leftarrow\mathrm{clip}\!\left(\lambda+\eta_\lambda(\widehat J_{\mathrm{base}}-\epsilon-\widehat J_0),0,\lambda_{\max}\right),
\label{eq:dual}
\end{equation}
with step size $\eta_\lambda>0$ and bound $\lambda_{\max}$, holding $\lambda$ fixed within each policy update.

Below the floor, the weight on being right the first time grows until the violation is repaired; above it, the extra weight relaxes and reflection receives a larger share of the update. Unlike a fixed first-attempt regularizer, the protection strength thus adapts to the policy's own behavior.

\begin{table*}[t]
\centering
\caption{\textbf{Overall performance on three video benchmarks.} Video-MME is evaluated without subtitles; MLVU reports m-avg and LongVideoBench validation accuracy. $^\ddagger$ marks source subtotal discrepancies (Appendix~\ref{app:reporting}).}
\label{tab:sota}
\small
\setlength{\tabcolsep}{4pt}
\fitresulttable{%
\begin{tabular}{l|cccc|cc|c}
\toprule
\multirow{2}{*}{Model}
& \multicolumn{4}{c|}{Video-MME}
& \multirow{2}{*}{MLVU}
& \multirow{2}{*}{LongVideoBench}
& \multirow{2}{*}{Avg}\\
& Short & Medium & Long & Overall & & & \\
\midrule
\multicolumn{8}{l}{\textit{Without tools}}\\
Qwen2.5-VL-7B \citep{bai2025qwen25vl} & 71.4 & 60.1 & 52.3 & 61.3 & 57.9 & 53.9 & 57.7\\
VL-Rethinker \citep{wang2025vlrethinker} & 70.1 & 60.8 & 53.0 & 61.3 & 63.5 & 55.6 & 60.1\\
Vision-R1-7B \citep{huang2025visionr1} & 71.2 & 60.2 & 52.8 & 61.4 & 68.2 & 52.8 & 60.8\\
Video-R1-7B$^\ddagger$ \citep{feng2025videor1} & 72.2 & 58.1 & 52.3 & 60.8 & 68.5 & 60.1 & 63.1\\
\midrule
\multicolumn{8}{l}{\textit{Tool-augmented reasoning}}\\
TimeSearch-R-7B \citep{pan2025timesearchr} & 73.5 & \textbf{61.2} & 53.4 & 62.7 & 69.3 & 61.2 & 64.4\\
Reflect-R1-7B \citep{reflectr1} & 73.9 & 61.0 & 55.6 & 63.5 & 69.8 & 62.5 & 65.3\\
\rowcolor{gray!12}
\method{}-7B (ours) & \textbf{75.0} & \textbf{61.2} & \textbf{57.3} & \textbf{64.5} & \textbf{71.2} & \textbf{65.2} & \textbf{67.0}\\
\bottomrule
\end{tabular}
}
\end{table*}

\subsection{Verified-Decision Anchoring}\label{sec:anchor}

\paragraph{Consolidate verified decisions where they are made.}
The multiplier reacts to aggregate reliability, whereas drift acts on individual decisions. We therefore anchor the decision itself, and only where its correctness is verified. Let $h_t$ denote the multimodal context preceding generated position $t$. At the initial-answer option-label position $t^\star$, define $p_\theta^\star=\pi_\theta(\cdot\mid h_{t^\star})$ and $p_{\mathrm{ref}}^\star=\pi_{\mathrm{ref}}(\cdot\mid h_{t^\star})$ on the same sampled context. With $y_b$ as the target label and $\pi_{\mathrm{ref}}$ as the KL anchor,
\vspace{-2mm}
\begin{equation}
\mathcal L_{\mathrm{dec}}=\Exp\!\left[\mu r_b\,\CE(p_\theta^\star,y_b)+\kappa_1\KL(p_\theta^\star\|p_{\mathrm{ref}}^\star)\right],
\label{eq:dec}
\end{equation}
where $\mu,\kappa_1\ge0$ weight the two terms.
The correctness gate $r_b$ makes consolidation selective. The cross-entropy term consolidates the reference decision only where it is correct, so the policy is never anchored to a wrong answer and the correctness reward remains free to find a better one, while the KL term bounds drift. The preserved confidence is thus earned from labels rather than read from self-reported certainty, and no confidence threshold is needed. Because both terms act only at $t^\star$, they protect the decision without constraining the reasoning that leads to it.

\paragraph{Give reflection a looser leash.}
For stage $s\in\{\mathrm{gate},\mathrm{rev}\}$ with generated positions $\mathcal T_s$, we regularize later stages with the token-averaged KL $\mathcal L_{\mathrm{late}}=\kappa_2\sum_{s}\Exp\bigl[\frac{1}{|\mathcal T_s|}\sum_{t\in\mathcal T_s}\KL(\pi_\theta(\cdot\mid h_t)\|\pi_{\mathrm{ref}}(\cdot\mid h_t))\bigr]$.
We set $\kappa_2<\kappa_1$: the answer decision is held close to the reference, whereas gate and reflection tokens, which must learn new behavior, are only loosely regularized.

\begin{table*}[t]
\centering
\caption{\textbf{Initial-answer accuracy versus reflection gain under a shared inference protocol.} $\Acc_{\mathrm{pre}}$ and Final are initial-answer and returned-answer accuracy; $\Delta$ is their difference in points. Duration breakdowns appear in Table~\ref{tab:training_duration}.}
\label{tab:main}
\small
\setlength{\tabcolsep}{5pt}
\fitresulttable{%
\begin{tabular}{l|ccc|ccc|ccc}
\toprule
\multirow{2}{*}{Training configuration}
& \multicolumn{3}{c|}{Video-MME}
& \multicolumn{3}{c|}{MLVU}
& \multicolumn{3}{c}{LongVideoBench}\\
& $\Acc_{\mathrm{pre}}$ & Final & $\Delta$
& $\Acc_{\mathrm{pre}}$ & Final & $\Delta$
& $\Acc_{\mathrm{pre}}$ & Final & $\Delta$\\
\midrule
Zero-shot, our protocol
& 57.7 & 57.2 & $-0.5$
& 56.0 & 55.2 & $-0.8$
& 51.1 & 49.9 & $-1.2$\\

Supervised initialization
& 60.4 & 60.1 & $-0.3$
& 65.9 & 65.5 & $-0.4$
& 58.3 & 57.6 & $-0.7$\\

\quad + Outcome RL
& 60.3 & 60.8 & $+0.5$
& 65.9 & 67.8 & $+1.9$
& 57.9 & 61.2 & $+3.3$\\

\rowcolor{gray!12}
\quad + \method{}
& \textbf{64.2} & \textbf{64.5} & $+0.3$
& \textbf{69.9} & \textbf{71.2} & $+1.3$
& \textbf{62.3} & \textbf{65.2} & $+2.9$\\
\bottomrule
\end{tabular}
}
\end{table*}

\subsection{Asymmetric Gradient Protection}\label{sec:protect}

Initial-answer credit and anchoring define what must be protected, and the interference channel decides whether reflective learning respects it. We therefore make the shared update asymmetric.

Let $\mathcal L_{\mathrm{pre}}$, $\mathcal L_{\mathrm{gate}}$, and $\mathcal L_{\mathrm{rev}}$ denote the negative clipped GRPO-style policy surrogates for $\Spre$, $\Sgate$, and $\Srev$, using the routed advantages above and auxiliary shaping terms (Appendices~\ref{app:impl} and \ref{app:surrogates}). We define the protected objective $\mathcal L_{\mathrm p}=\mathcal L_{\mathrm{pre}}+\mathcal L_{\mathrm{dec}}$ and the reflective objective $\mathcal L_{\mathrm r}=\mathcal L_{\mathrm{gate}}+\mathcal L_{\mathrm{rev}}+\mathcal L_{\mathrm{late}}$, so that $\mathcal L_{\mathrm p}$ both learns correct initial answers from current rollouts and consolidates verified reference decisions.

Let $g_{\mathrm p}=\nabla_\theta\mathcal L_{\mathrm p}$ and $g_{\mathrm r}=\nabla_\theta\mathcal L_{\mathrm r}$ on the same batch and parameters. We compute
\begin{align}
\omega&=\min\!\left(0,\frac{\langle g_{\mathrm r},g_{\mathrm p}\rangle}{\|g_{\mathrm p}\|^2+\varepsilon_{\mathrm{proj}}}\right),\label{eq:projection}\\
g_{\mathrm r}'&=g_{\mathrm r}-\omega g_{\mathrm p},\qquad g=g_{\mathrm p}+\alpha g_{\mathrm r}'.\nonumber
\end{align}
Here $\omega\le0$ is the correction coefficient and $\varepsilon_{\mathrm{proj}}>0$ a numerical stabilizer. When the two gradients agree, reflection is learned unchanged. When they conflict, the opposing component of the reflection gradient is removed while $g_{\mathrm p}$ is kept intact. With exact gradients and $\varepsilon_{\mathrm{proj}}=0$, this is the Euclidean projection of $g_{\mathrm r}$ onto the half-space $\{v:\langle g_{\mathrm p},v\rangle\ge0\}$, and $\langle g_{\mathrm p},g\rangle\ge\|g_{\mathrm p}\|^2$ for $\alpha\ge0$, so a sufficiently small step decreases the protected objective at least as fast as without reflection (Proposition~\ref{prop:protect}, Appendix~\ref{app:proof}). The geometry follows A-GEM \citep{chaudhry2019agem}, but what we protect is the current initial-answer reliability of the same policy rather than a memory of past tasks.

\paragraph{Protection does not freeze wrong answers.}
The coefficient $\omega$ is computed per batch, and $r_b$ gates only the CE anchor, not the projection. Because $\ctr(r_{0,i})$ is negative for incorrect initial answers, $g_{\mathrm p}$ both raises the probability of correct initial answers and lowers that of incorrect ones. Asymmetric protection preserves both directions: it forbids reflection from undoing the push away from a wrong initial answer as well as the pull toward a right one, so the protected direction points toward better answers rather than toward the status quo.

\paragraph{Implementation as loss reweighting.}
Since computing both full-model gradients separately is expensive, we estimate $\widehat\omega$ on 16 top-layer parameter tensors, detach it, and minimize
\begin{equation}
\mathcal L_{\mathrm{update}}=(1-\alpha\widehat\omega)\mathcal L_{\mathrm p}+\alpha\mathcal L_{\mathrm r}.
\label{eq:update}
\end{equation}
With $\widehat\omega$ held constant, the gradient of Equation~\ref{eq:update} is $g_{\mathrm p}+\alpha(g_{\mathrm r}-\widehat\omega g_{\mathrm p})$, exactly the protected update of Equation~\ref{eq:projection}. The projection thus reduces to a single scalar that up-weights the initial-answer objective whenever reflection conflicts with it. The subset only affects the coefficient estimate, and the update still trains all parameters with a single backward pass (Algorithm~\ref{alg:rwr}, Appendix~\ref{app:proof}).

\begin{table*}[t]
\centering
\caption{\textbf{Contribution of each mechanism on Video-MME.} Mechanisms are added cumulatively to Outcome RL and grouped by the channel they close; the last row removes gradient protection from the full configuration. Long is final accuracy on long videos. The complete progression is in Table~\ref{tab:ablation_full}.}
\label{tab:ablation}
\small
\setlength{\tabcolsep}{5pt}
\fitresulttable{%
\begin{tabular}{llcccc}
\toprule
Channel & Configuration & $\Acc_{\mathrm{pre}}$ & Final & $\Delta$ & Long\\
\midrule
-- & Outcome RL & 60.3 & 60.8 & $+0.5$ & 51.8\\
Incentive & + routed initial-answer credit & 61.0 & 61.6 & $+0.6$ & 52.7\\
Drift & \quad + CE anchor at $t^\star$ & 62.3 & 62.4 & $+0.1$ & 54.0\\
Incentive + drift & \quad + floor multiplier, KL at $t^\star$, transition utility & 62.0 & 62.3 & $+0.3$ & 53.8\\
Interference & \quad + asymmetric gradient protection & 63.2 & 63.4 & $+0.2$ & 55.8\\
\rowcolor{gray!12}
All & \quad + later-stage KL, reference-error sampling (full \method{}) & \textbf{64.2} & \textbf{64.5} & $+0.3$ & \textbf{57.3}\\
\midrule
Interference & Full \method{} w/o gradient protection & 63.5 & 64.0 & $+0.5$ & 55.8\\
\bottomrule
\end{tabular}
}
\end{table*}

\section{Experiments}\label{sec:experiments}

We evaluate whether \method{} improves video understanding (Section~\ref{sec:results}), whether and through which mechanisms it prevents confidence erosion (Section~\ref{sec:erosion_exp}), and how reflection behaves on a confident initial answer (Section~\ref{sec:analysis}).

\subsection{Experimental Setup}\label{sec:exp_setup}
All trained variants start from the same supervised Qwen2.5-VL-7B-Instruct policy \citep{bai2025qwen25vl} and train on the 42{,}602 multiple-choice questions of the Reflect-R1 pool \citep{reflectr1} for up to 1{,}000 steps. Appendices~\ref{app:data} and \ref{app:impl} detail the data, inference protocol, and hyperparameters.
\emph{Outcome RL}, the standard reflective RL recipe analyzed in Observation~\ref{prop:hazard}, trains the same staged policy with final correctness plus a correction bonus applied to the whole trajectory, and otherwise shares the initialization, data, rollout and inference protocol, and hyperparameters of \method{} (Appendix~\ref{app:impl}).
Following Reflect-R1 \citep{reflectr1}, we evaluate on Video-MME without subtitles, MLVU m-avg, and LongVideoBench validation \citep{fu2025videomme,zhou2025mlvu,wu2024longvideobench}, reporting initial-answer accuracy $\Acc_{\mathrm{pre}}$, final accuracy $\Acc(\hat y)$, reflection gain $\Delta$, and the unweighted benchmark average Avg (Appendix~\ref{app:reporting}).

\subsection{Main Results}\label{sec:results}
As shown in Table~\ref{tab:sota}, \method{} achieves the best results among compared 7B models on every benchmark, with a 67.0\% average.
It outperforms Reflect-R1, the strongest reflective baseline, by 1.0--2.7 points, with the largest margins on long Video-MME videos and LongVideoBench, where an unreliable first glance is most costly.
These results show that protecting the initial answer pays off most in exactly the settings that motivate re-examination.

\subsection{Does \method{} Prevent Confidence Erosion?}\label{sec:erosion_exp}\label{sec:ablation}

\paragraph{\method{} improves the answer, not the fallback.}
As illustrated in Figure~\ref{fig:plane}, Outcome RL channels its gains into reflection.
In contrast, Table~\ref{tab:main} shows that \method{} raises Video-MME initial-answer accuracy by 3.9 points and final accuracy by 3.7 points with a \emph{smaller} reflection gain ($+0.3$), so ranking policies by reflection gain would select the wrong one.
Reflection still helps on every benchmark, most on LongVideoBench ($+2.9$), where a single preview is least sufficient.
This confirms that reflection gain misrepresents progress, since the improvement of \method{} comes from a stronger initial answer that carries through to the final answer.

\paragraph{Closing the incentive channel.}
Routed initial-answer credit directly tests Observation~\ref{prop:hazard}.
As shown in Table~\ref{tab:ablation}, once the initial answer is no longer paid for errors that reflection repairs, initial-answer and final accuracy rise by 0.7 and 0.8 points without lowering the reflection gain.
This confirms the reward discount of Observation~\ref{prop:hazard} as a real cause of erosion, but removing it alone is not enough, as the drift and interference mechanisms contribute the rest of the initial-answer gain.

\paragraph{Closing the drift channel.}
As shown in Table~\ref{tab:ablation}, anchoring verified reference decisions at $t^\star$ raises initial-answer accuracy by 1.3 points while the reflection gain falls to $+0.1$, consistent with RL overwriting decisions the reference already gets right even under routed credit.
The anchor recovers these decisions without constraining the reasoning, and the floor multiplier, decision-local KL, and transition utility then partly restore the reflection gain.
Drift is therefore a distinct erosion channel that reward design alone cannot close, and anchoring verified decisions at the decision token closes it effectively.

\paragraph{Interference is most pronounced on long videos.}
As shown in Table~\ref{tab:ablation}, asymmetric gradient protection adds 1.2 points of initial-answer accuracy, 1.1 of final accuracy, and 2.0 on long videos.
Removing it from the full configuration lowers all three while \emph{raising} the reflection gain, the erosion signature predicted by Observation~\ref{prop:hazard}.
Gradient interference thus erodes the initial answer even when rewards and anchors are in place, and asymmetric protection is most valuable on long videos, where the initial answer is hardest to get right.
Across the progression, the two largest initial-answer gains, from the CE anchor and gradient protection, both carry through to final accuracy, supporting our premise that a stronger initial answer yields a stronger final one.

\subsection{Reflection on a Confident Initial Answer}\label{sec:analysis}

\begin{table}[t]
\centering
\caption{\textbf{Gate-conditioned outcomes on Video-MME} (\%). Each policy selects its own subsets; \seek{} and \trust{} are shown. Fix and break frequencies use all \seek{} questions as the denominator; the per-error repair rates in the text divide Fix by the initial-answer error rate on \seek{} questions.}
\label{tab:gate}
\small
\setlength{\tabcolsep}{3pt}
\fitresulttable{%
\begin{tabular}{lccccc}
\toprule
& \shortstack{\seek{}\\rate} & \shortstack{Acc\\\seek{}} & \shortstack{Acc\\\trust{}} & Fix & Break\\
\midrule
Outcome RL & 14.6 & 44.3 & 62.8 & 16.7 & 9.6\\
\method{} & 15.7 & \textbf{51.9} & \textbf{65.8} & 13.2 & 11.1\\
\bottomrule
\end{tabular}
}
\end{table}

\paragraph{Same corrector, more reliable initial answer.}
As shown in Table~\ref{tab:gate}, both gates send a similar share of questions to retrieval, yet the answers that \method{} trusts are 3.0 points more accurate.
On \seek{} questions, \method{} starts from a far more accurate initial answer (49.8\% vs.\ 37.2\%), while both correctors repair a similar fraction of initial errors (26.3\% vs.\ 26.6\%), far above the 6.2\% of \review{}, which re-examines the first look without new frames.
\method{} thus applies an equally effective corrector to a far more reliable initial answer, showing that its gains come from avoiding errors upfront rather than from repairing more of them.

\begin{figure}[t]
\centering
\includegraphics[width=\columnwidth]{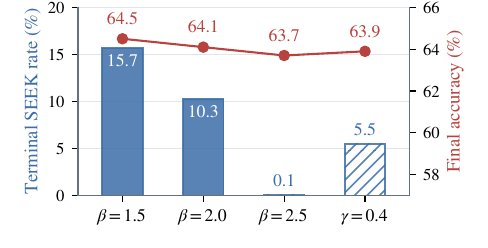}
\vspace{-6mm}
\caption{\textbf{Penalizing revision does not protect the initial answer.} Video-MME \seek{} rate, the share of questions routed to \seek{} (bars, left) and final accuracy (line, right), varying one penalty from the default $\beta=1.5$, $\gamma=0.2$.}
\vspace{-2mm}
\label{fig:penalty}
\end{figure}

\paragraph{Timid reflection and self-reported confidence are no substitute.}
As demonstrated in Figure~\ref{fig:penalty}, penalizing breaks more heavily suppresses reflection without strengthening the initial answer.
Raising $\beta$ from 1.5 to 2.0 cuts the \seek{} rate by a third and lowers final accuracy, and a larger $\beta$ or $\gamma$ nearly eliminates retrieval with further losses.
Making reflection timid thus gives up useful retrieval without making the initial answer more reliable.
Self-reported confidence is no substitute either. As reported in Table~\ref{tab:auc}, the gate's confidence predicts initial-answer correctness on Video-MME with an AUC of only 0.68, and the option belief over retrieved frames separates fixes from breaks at 0.53.
The model's own confidence is therefore too weak a signal to decide when to keep or overturn the initial answer.
Erosion is thus prevented by protecting the initial answer with verified correctness, not by making reflection timid.

\section{Conclusion}\label{sec:conclusion}

RL on answer-and-re-examine video loops can erode the initial answer, raising reflection gain while initial-answer accuracy stalls.
Following \emph{confident first, reflective second}, \method{} closes the incentive, drift, and interference channels with dedicated mechanisms, achieving the best results among compared 7B models on three video benchmarks and improving initial-answer and final accuracy over Outcome RL by 4.1 and 3.7 points on average.

\section*{Limitations}\label{sec:limitations}
We study one backbone and one answer-and-re-examine loop on long-video multiple-choice benchmarks, with one run per configuration; extending the analysis to other loops trained with final-outcome rewards, such as agentic temporal search, is future work.

\bibliography{custom}
\clearpage
\appendix

\section{Extended Related Work}\label{app:related}

\paragraph{Protecting one capability while learning another.}
Constrained RL enforces performance requirements with Lagrangian multipliers \citep{achiam2017cpo,tessler2019rcpo}, continual learning protects earlier tasks with parameter penalties or gradient constraints \citep{kirkpatrick2017ewc,lopezpaz2017gem,chaudhry2019agem}, PCGrad removes conflicts among task gradients \citep{yu2020pcgrad}, and self-reported confidence is known to be miscalibrated \citep{guo2017calibration,kadavath2022know}.
\method{} brings these ideas inside a single reflective policy, where answering and reflecting are coupled capabilities, and adapts each to that setting:
the constraint is a \emph{confidence floor} on the policy's own initial-answer accuracy, enforced by a dual multiplier on initial-answer credit;
consolidation targets only \emph{label-verified} reference decisions at the decision token, rather than decisions the model is merely confident in;
and gradient surgery is \emph{asymmetric}, so reflection yields to answering but never the reverse.

\section{Training Data and Supervised Initialization}\label{app:data}
\subsection{RL Question Pool}\label{app:rl_data}

We use the multiple-choice question pool released with Reflect-R1 \citep{reflectr1}.
The short split contains 30{,}974 questions on 15{,}147 videos from LLaVA-Video-178K, STAR, NeXT-QA, CLEVRER, and PerceptionTest.
These videos last 4--180 seconds, with a median duration of 32 seconds.
The long split contains 11{,}628 questions on 3{,}527 Panda-70M videos lasting 10 seconds to one hour, with a median duration of 17 minutes.
Its 21 question types include temporal gaps, action localization, object existence, duration, and temporal order.

\begin{table*}[t]
\centering
\caption{\textbf{Training sources and cached reference accuracy.} The cache includes 498 long-video predictions recomputed with a 12-frame budget.}
\label{tab:rl_data}
\small
\fitresulttable{%
\begin{tabular}{llrrc}
\toprule
Split & Source & Questions & Durations & $\Exp[r_b]$\\
\midrule
\multirow{5}{*}{Short} & LLaVA-Video-178K & 20{,}654 & 0--3\,min & 0.79\\
& STAR & 4{,}197 & & 0.38\\
& NeXT-QA & 2{,}297 & & 0.78\\
& CLEVRER & 1{,}919 & & 1.00\\
& PerceptionTest & 1{,}907 & & 0.66\\
\midrule
Long & Panda-70M & 11{,}628 & 10\,s--1\,h & 0.15\\
\midrule
Total & & 42{,}602 & & 0.58\\
\bottomrule
\end{tabular}
}
\end{table*}

\paragraph{Reference cache.}
Cached answers are greedy direct predictions from pretrained Qwen2.5-VL-7B-Instruct.
For 498 long-video questions without released predictions, we recomputed answers from 12 uniformly sampled frames.
Cache accuracy is 73.9\% on short questions, 15.3\% on long questions, and 57.9\% overall.

\paragraph{Overlap checks.}
Comparing training-video YouTube identifiers with the 900 Video-MME videos found no shared identifiers.
Exact question-string matches covered 12 generic templates, including questions about a video's main focus.

\subsection{Supervised Initialization}\label{app:sft_data}

Supervised trajectories teach an overview request, an initial answer, a gate decision, and a final answer.
Answer turns use \texttt{<think>\ldots</think>} followed by \texttt{<answer>\ldots</answer>}, and the gate outputs reasoning and the JSON fields \texttt{confidence}, \texttt{mode}, \texttt{queries}, and \texttt{num\_frames}.
Visual observations are inserted between assistant turns, and the supervised overview contains at most 60 frames.
All RL runs start from the same supervised checkpoint, and gate confidence targets are constructed from correctness labels.

\section{Implementation and Objective Details}\label{app:impl}
\subsection{Training Settings}

\begin{table*}[t]
\centering
\caption{\textbf{\method{} training and inference settings.}}
\label{tab:hparams}
\small
\setlength{\tabcolsep}{4pt}
\fitresulttable{%
\begin{tabular}{llp{8.8cm}}
\toprule
Group & Hyperparameter & Value\\
\midrule
Rollout & Preview (train / eval) & 2\,fps; $\le734$ frames, 10{,}240 visual tokens / $\le768$ frames, 15{,}360 tokens\\
& First look & One tool call by $\Spre$; $\le8$ cache frames selected by query relevance and temporal diversity\\
& Retrieval frames per round & Gate-selected, clipped to $[4,32]$ (default 8); no frame retrieved twice\\
& Retrieval rounds & \trust{}, \review{}: 0; \seek{}: 1; \multihop{}: $\le3$\\
& \multihop{} stopping & From round 2, sufficiency score $>0.7$ (top belief, margin, entropy)\\
& Option belief & SigLIP-so400m frame--option similarity, temperature 0.05\\
& Sampling & Temperature 1.0, top-$p$ 1.0 (vLLM)\\
& Maximum generated length & 512 tokens per stage in training, 1{,}024 in evaluation (greedy)\\
\midrule
Optimization & Optimizer & AdamW (momentum coefficients 0.9 and 0.999), no weight decay, gradient clipping 1.0\\
& Learning rate & $10^{-6}$, cosine decay, no warmup\\
& Batch & 2 questions $\times$ 8 trajectories per optimizer step\\
& Training steps & Up to 1{,}000\\
& Replay & Groups with all-zero advantages are replaced from a 16-group buffer\\
& Precision and system & bf16, DeepSpeed ZeRO-3 with CPU offload, gradient checkpointing, FlashAttention-2\\
& Hardware & 8$\times$H100 per run\\
\midrule
GRPO & Trajectories per question & 8\\
& Advantage normalization & Group centering without standard-deviation scaling\\
\midrule
Reward & Break / changed-wrong penalties & $\beta=1.5$, $\gamma=0.2$\\
& Gate effort cost & $0.05e$, with $e=0,0.25,0.5,1$ across the four modes\\
& Format reward weight & 0.2\\
\midrule
Confidence floor & Accuracy tolerance & $\epsilon=0.10$\\
& Dual step / maximum & $\eta_\lambda=0.1$, $\lambda_{\max}=5$\\
& Accuracy EMA decay & 0.99, initialized at the first batch estimate\\
& Initial multiplier & $\lambda=0$, updated after every rollout batch\\
\midrule
Anchoring & Cross-entropy weight & $\mu=1.0$\\
& Decision / later-stage KL & $\kappa_1=0.05$, $\kappa_2=0.005$, both to $\pi_{\mathrm{ref}}$\\
& KL estimator & $k_3$: $e^{d}-d-1$, $d=\log\pi_{\mathrm{ref}}-\log\pi_\theta$ clipped to $[-10,10]$\\
\midrule
Gradient & Reflection weight & $\alpha=1.0$\\
& Projection estimate & Last 16 trainable tensors with $\le10^6$ entries (top-layer norms and biases)\\
& Stabilizer / clipping & $\varepsilon_{\mathrm{proj}}=10^{-12}$; $\widehat\omega$ clipped to $[-10,0]$\\
\midrule
Sampling & Reference-error question weight & 2 in the full configuration\\
\bottomrule
\end{tabular}
}
\end{table*}

Losses are averaged over generated text positions in each stage, and the four gate effort costs correspond to \trust{}, \review{}, \seek{}, and \multihop{}, respectively.
First-look and retrieved frames come from a separate 2\,fps cache of at most 2{,}048 frames per video. Because the preview shares one pixel budget across its frames, it is coarse for long videos: at the 768-frame evaluation cap, each preview frame has at most 40 units of $28{\times}28$ pixels (about $177{\times}177$), and at most 192 units for short videos, whereas each cache frame has up to 256 units (about $448{\times}448$). The preview is used only by $\Spre$; the gate sees the first-look frames, and $\Srev$ sees the first-look and retrieved frames.
Apart from the decision-local and later-stage terms of Section~\ref{sec:anchor}, no sequence-wide KL is used, and format rewards are routed only to the gate and post-answer stage.

\paragraph{Outcome RL reward.}
Outcome RL scores each trajectory by its final correctness plus a correction bonus, $R_i=r_{f,i}+c\,\Ind[\fix_i]$ with $c>0$, and assigns the group-centered advantage $\ctr(R_i)$ to every generated token of the trajectory, without stage routing, floor multiplier, anchoring, or gradient protection; format rewards are the same as for \method{}.

\paragraph{Inference protocol.}
The preview is sampled at 2\,fps with at most 768 frames, and $\Spre$ adds a first look of at most eight cache frames; each retrieval round adds 4--32 gate-requested frames, with one round for \seek{} and at most three for \multihop{}.
SigLIP-so400m option beliefs let \multihop{} stop once evidence is sufficient, and evaluation decodes greedily.
Outcome RL uses this protocol unchanged, and Reflect-R1 is reproduced with its official pipeline.

\paragraph{Answer extraction.}
The returned answer is read from the initial-answer stage under \trust{} and from the post-answer stage otherwise; $\Acc_{\mathrm{pre}}$ always reads the initial-answer stage.
Within a stage, the option letter is taken from the answer tag, falling back to the last standalone option letter when the tag is missing or malformed; an output without any option letter is scored as incorrect.

\subsection{Reference Policies and the Confidence Floor}\label{app:floor}

The frozen pretrained model $\pi_{\mathrm{base}}$ supplies the cached direct answers $y_b$ and their correctness $r_b$, which define $J_{\mathrm{base}}$ and the cross-entropy targets in Equation~\ref{eq:dec}. The frozen supervised initialization $\pi_{\mathrm{ref}}$ supplies the KL targets and is also the starting RL policy. The cached answer is produced with a direct-answer prompt, whereas the initial answer is produced within the staged protocol.

The floor $J_{\mathrm{base}}-\epsilon$ is defined relative to the reference and is evaluated under the same training sampler $\mathcal D_w$ as $J_0$, so reference-error reweighting changes both quantities consistently (Appendix~\ref{app:accuracy_estimation}). We use $\epsilon=0.10$. The floor is fixed relative to the reference rather than to the best checkpoint reached during training. In training, $\lambda$ starts at 0, both moving averages are initialized at their first batch estimates, and $\lambda$ is updated after every rollout batch by Equation~\ref{eq:dual}.

\subsection{Estimating the Accuracy Signals}\label{app:accuracy_estimation}

\paragraph{Population quantities under the training sampler.}
For a training pool of $N$ labeled inputs $(x_n,y_n^\star)$, let $p_n=\Pr_{\mathcal D_w}(x_n)$ be the sampling probability of question $n$, with $\sum_{n=1}^{N}p_n=1$. Let $r_{b,n}$ denote its cached reference correctness. The two population quantities are
\begin{equation}
\begin{aligned}
J_0(\theta)
&=\sum_{n=1}^{N}p_n
\Pr_{\pi_\theta}(y_0=y_n^\star\mid x_n),\\
J_{\mathrm{base}}
&=\sum_{n=1}^{N}p_n r_{b,n}.
\end{aligned}
\label{eq:accuracy_population}
\end{equation}
The first quantity averages over sampled questions and initial-answer generation, and the second averages fixed cache entries under the same question distribution, so reference-error reweighting changes $p_n$ and therefore both quantities.

\paragraph{Batch accuracy estimates.}
At training step $t$, sample $B$ questions from $\mathcal D_w$ and generate $G$ trajectories per question using the current rollout policy. Let $r_{0,ji}^{(t)}$ be the raw initial-answer correctness of trajectory $i$ for sampled question $j$, and let $r_{b,j}^{(t)}$ be that question's cached reference correctness. Compute
\begin{equation}
\begin{aligned}
\bar J_{0,t}
&=\frac{1}{BG}\sum_{j=1}^{B}\sum_{i=1}^{G}r_{0,ji}^{(t)},\\
\bar J_{\mathrm{base},t}
&=\frac{1}{B}\sum_{j=1}^{B}r_{b,j}^{(t)}.
\end{aligned}
\label{eq:accuracy_batch}
\end{equation}
The first estimate uses only the initial answers, before any revision, and excludes the multiplier $1+\lambda$, group centering, transition utility, and auxiliary shaping. The reference estimate reads one cached value per question, so no reference-model generation is required, and since questions are already sampled from $\mathcal D_w$, the reference-error weights are not applied a second time.

\paragraph{Moving averages and update timing.}
Let $\nu=0.99$ be the accuracy EMA decay from Table~\ref{tab:hparams}, and let $\widehat J_{0,t}$ and $\widehat J_{\mathrm{base},t}$ denote the controller states available at the start of step $t$. After the policy update, incorporate the accuracies from that step's sampled trajectories:
\begin{equation}
\begin{aligned}
\widehat J_{0,t+1}
&=\nu\widehat J_{0,t}
+(1-\nu)\bar J_{0,t},\\
\widehat J_{\mathrm{base},t+1}
&=\nu\widehat J_{\mathrm{base},t}\\
&\quad +(1-\nu)\bar J_{\mathrm{base},t}.
\end{aligned}
\label{eq:accuracy_ema}
\end{equation}
The updated values enter Equation~\ref{eq:dual} to obtain the multiplier for the next batch (Algorithm~\ref{alg:rwr}). The statistics come from the batch already generated, so no second rollout is needed, and both EMA states and the multiplier are treated as constants during backpropagation.

\subsection{Stage Surrogates and Candidate Rewards}\label{app:surrogates}

Stage routing omits the downstream transition utility from the initial-answer score-function terms, so initial-answer priority is implemented through separate surrogates rather than an unbiased estimator of the joint gradient of Equation~\ref{eq:constrained}.
When all group rewards are equal, the task advantage is zero, while auxiliary terms and anchoring still contribute gradients.

For \trust{} trajectories, training still generates a post-answer candidate, and the gate and post-answer advantages use the candidate utility $u$ in every mode, whereas the utility of the returned answer is
\begin{equation}
u_{\mathrm{deploy}}=\Ind[m\ne\trust{}]u.
\label{eq:deployutility}
\end{equation}
The gate is thus trained on candidate utility even under \trust{}, where inference returns $y_0$.

\paragraph{Transition-penalty calibration.}
Let $p_{\fix},p_{\brk},p_{\cw}$ be transition frequencies under the initialization and rollout protocol.
For $p_{\brk}>0$, the mean raw transition utility is zero at
\begin{equation}
\beta^\star=\frac{p_{\fix}-\gamma p_{\cw}}{p_{\brk}}.
\label{eq:beta}
\end{equation}
Under the supervised initialization, $\beta^\star\approx1.43$, which motivates the default $\beta=1.5$; Figure~\ref{fig:penalty} reports the sensitivity.

\subsection{Reporting and Reproducibility Notes}\label{app:reporting}

Duration subtotals in Table~\ref{tab:training_duration} are computed on the same checkpoints as the overall scores, and Video-MME's three duration groups have equal size, so Overall is their mean.
Baseline scores in Table~\ref{tab:sota} are transcribed from Reflect-R1 \citep{reflectr1}; $^\ddagger$ marks Video-R1, whose published duration subtotals do not average to its published overall, a discrepancy present in the source table.
Baselines use the frame budgets and inference protocols of their original papers, so Table~\ref{tab:sota} compares complete systems; Table~\ref{tab:main} provides the controlled comparison under a single protocol.
Our reproduction of Reflect-R1 with its official pipeline reaches 62.6\% on Video-MME (Table~\ref{tab:training_duration}), 0.9 points below its published 63.5\%; Table~\ref{tab:sota} reports the published number.

\section{Detailed Experimental Results}\label{app:ablation_details}

\subsection{Video-MME Duration Breakdowns}

Table~\ref{tab:training_duration} reports Video-MME duration breakdowns for our training configurations and the official-pipeline Reflect-R1 reproduction, complementing Table~\ref{tab:main}.

\begin{table*}[t]
\centering
\caption{\textbf{Video-MME duration breakdowns for training comparisons.} All scores are final accuracy without subtitles. Reflect-R1$^\ast$ uses its official pipeline; the other rows use our staged protocol.}
\label{tab:training_duration}
\small
\setlength{\tabcolsep}{6pt}
\fitresulttable{%
\begin{tabular}{lcccc}
\toprule
Configuration & Short & Medium & Long & Overall\\
\midrule
Zero-shot, our protocol & 67.0 & 55.1 & 49.4 & 57.2\\
Reflect-R1 reproduction$^\ast$ & 74.7 & \textbf{62.3} & 50.7 & 62.6\\
Supervised initialization & 71.3 & 59.3 & 49.8 & 60.1\\
\quad + Outcome RL & 72.7 & 57.9 & 51.8 & 60.8\\
\rowcolor{gray!12}
\quad + \method{} & \textbf{75.0} & 61.2 & \textbf{57.3} & \textbf{64.5}\\
\bottomrule
\end{tabular}
}
\end{table*}

\subsection{Complete Training Progression}

Table~\ref{tab:ablation_full} lists all nine progressive configurations underlying Table~\ref{tab:ablation}, adding routed initial-answer credit, the CE anchor, the floor multiplier, decision-local KL, transition utility, and asymmetric gradient protection in sequence.
The final pair adds later-stage KL and reference-error reweighting together.

\begin{table*}[t]
\centering
\caption{\textbf{Progressive training configurations on Video-MME.} Components are added cumulatively except in the explicitly marked no-projection row. $^\S$The final two configurations add later-stage KL and double the reference-error sampling weight together.}
\label{tab:ablation_full}
\small
\setlength{\tabcolsep}{4pt}
\fitresulttable{%
\begin{tabular}{llcccc}
\toprule
Channel & Configuration & Overall & Long & $\Acc_{\mathrm{pre}}$ & $\Delta$\\
\midrule
-- & Outcome RL & 60.8 & 51.8 & 60.3 & $+0.5$\\
Incentive & + routed initial-answer credit & 61.6 & 52.7 & 61.0 & $+0.6$\\
Drift & \quad + cross-entropy anchor at $t^\star$ & 62.4 & 54.0 & 62.3 & $+0.1$\\
Incentive & \quad + floor multiplier & 62.2 & 53.6 & 62.4 & $-0.2$\\
Drift & \quad + KL at $t^\star$ & 62.3 & 54.1 & 62.3 & $0.0$\\
Incentive & \quad + transition utility & 62.3 & 53.8 & 62.0 & $+0.3$\\
Interference & \quad + asymmetric gradient protection & 63.4 & 55.8 & 63.2 & $+0.2$\\
All & \quad + later-stage KL / sampling$^\S$, no projection & 64.0 & 55.8 & 63.5 & $+0.5$\\
\rowcolor{gray!12}
All & \quad Full \method{}$^\S$ & \textbf{64.5} & \textbf{57.3} & \textbf{64.2} & $+0.3$\\
\bottomrule
\end{tabular}
}
\end{table*}

\subsection{Self-Reported Confidence as a Selection Signal}\label{app:auc}

Table~\ref{tab:auc} reports how well the model's own signals separate outcomes that matter for reflection on Video-MME.
Neither signal is reliable enough to decide when to keep or overturn the initial answer, which motivates protecting it with label-verified correctness (Section~\ref{sec:anchor}).

\begin{table}[t]
\centering
\caption{\textbf{Self-reported signals on Video-MME} (\method{}). AUC of each signal for the stated binary outcome; 0.5 is chance.}
\label{tab:auc}
\small
\setlength{\tabcolsep}{4pt}
\fitresulttable{%
\begin{tabular}{llc}
\toprule
Signal & Outcome predicted & AUC\\
\midrule
Gate confidence & Initial answer correct & 0.68\\
Retrieved-frame option belief & \fix{} vs.\ \brk{} (\seek{}) & 0.53\\
\bottomrule
\end{tabular}
}
\end{table}

\section{Training Algorithm}\label{app:algorithm}

Algorithm~\ref{alg:rwr} gives the \method{} update.
The projection coefficient is estimated on the tensor subset and detached before forming the full-model loss, and the moving averages and multiplier are updated after each policy step.

\begin{algorithm*}[t]
\caption{\methodlong{} (\method{})}
\label{alg:rwr}
\small
\begin{algorithmic}[1]
\Require Question pool $\mathcal D$ with $(y_b,r_b)$ cached by frozen $\pi_{\mathrm{base}}$; initialization $\theta_0$ and its frozen copy $\pi_{\mathrm{ref}}$
\Require Group size $G$; tolerance $\epsilon$; weights $\alpha,\mu,\kappa_1,\kappa_2$; dual settings $\eta_\lambda,\lambda_{\max}$
\Require Projection tensor subset $\mathcal P$; stabilizer $\varepsilon_{\mathrm{proj}}$
\Ensure Trained shared policy $\pi_\theta$
\State $\theta\gets\theta_0$; $\lambda\gets0$; initialize $\widehat J_0,\widehat J_{\mathrm{base}}$ at their first batch estimates
\For{each training batch}
    \State Sample questions $\mathcal B\sim\mathcal D_w$; use reference-error weight 2 in the full configuration
    \For{each question $x\in\mathcal B$}
        \State Sample $G$ trajectories $\tau_i=(\Spre^{(i)},\Sgate^{(i)},\Srev^{(i)})$ from $\pi_\theta$
        \State Extract $(y_{0,i},m_i,y_{f,i})$ and initial-answer positions $t_i^\star$
        \State Compute $r_{0,i}$ and $u_i$ using task labels and Equation~\ref{eq:utility}
        \State Assign $(1+\lambda)\ctr(r_{0,i})$ to $\Spre^{(i)}$ and $\ctr(u_i)$ to $\Sgate^{(i)},\Srev^{(i)}$
    \EndFor
    \State Add format and gate-effort shaping; construct clipped stage surrogates
    \State Compute $\mathcal L_{\mathrm{dec}}$ and $\mathcal L_{\mathrm{late}}$ using $\pi_{\mathrm{ref}}$
    \State $\mathcal L_{\mathrm p}\gets\mathcal L_{\mathrm{pre}}+\mathcal L_{\mathrm{dec}}$
    \State $\mathcal L_{\mathrm r}\gets\mathcal L_{\mathrm{gate}}+\mathcal L_{\mathrm{rev}}+\mathcal L_{\mathrm{late}}$
    \State $g_{\mathrm p}^{\mathcal P}\gets\nabla_{\theta_{\mathcal P}}\mathcal L_{\mathrm p}$;\quad
           $g_{\mathrm r}^{\mathcal P}\gets\nabla_{\theta_{\mathcal P}}\mathcal L_{\mathrm r}$
    \State $\widehat\omega\gets\mathrm{stopgrad}\!\left[
      \min\!\left(0,\dfrac{\langle g_{\mathrm r}^{\mathcal P},g_{\mathrm p}^{\mathcal P}\rangle}
      {\|g_{\mathrm p}^{\mathcal P}\|^2+\varepsilon_{\mathrm{proj}}}\right)\right]$
    \State $\mathcal L_{\mathrm{update}}\gets(1-\alpha\widehat\omega)\mathcal L_{\mathrm p}+\alpha\mathcal L_{\mathrm r}$
    \State Update $\theta$ with the configured optimizer using $\nabla_\theta\mathcal L_{\mathrm{update}}$
    \State Update $\widehat J_0,\widehat J_{\mathrm{base}}$ using Equations~\ref{eq:accuracy_batch}--\ref{eq:accuracy_ema}
    \State $\lambda\gets\mathrm{clip}\big(\lambda+\eta_\lambda(\widehat J_{\mathrm{base}}-\epsilon-\widehat J_0),0,\lambda_{\max}\big)$
\EndFor
\State \Return $\pi_\theta$
\end{algorithmic}
\end{algorithm*}

\section{Analysis of Asymmetric Gradient Protection}\label{app:proof}

\begin{proposition}[Fixed-batch protected descent]\label{prop:protect}
Assume that $\mathcal L_{\mathrm p}$ has a locally Lipschitz gradient, the sampled batch and advantages are fixed, $g_{\mathrm p}\ne0$, $\varepsilon_{\mathrm{proj}}=0$, and both gradients in Equation~\ref{eq:projection} are exact.
For a plain gradient step with size $\eta>0$ and $\alpha\ge0$,
\[
\mathcal L_{\mathrm p}(\theta-\eta g)-\mathcal L_{\mathrm p}(\theta)
\le-\eta\|g_{\mathrm p}\|^2+O(\eta^2).
\]
\end{proposition}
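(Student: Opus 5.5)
The plan is a first-order Taylor bound for $\mathcal L_{\mathrm p}$ along the direction $g$, combined with the inner-product inequality $\langle g_{\mathrm p},g\rangle\ge\|g_{\mathrm p}\|^2$ quoted in Section~\ref{sec:protect}.

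\textbf{Step 1 (the inner-product bound).} With $\varepsilon_{\mathrm{proj}}=0$ and $g_{\mathrm p}\ne0$, Equation~\ref{eq:projection} gives $\omega=\min(0,c)$, where $c=\langle g_{\mathrm r},g_{\mathrm p}\rangle/\|g_{\mathrm p}\|^2$. Then
\[
\langle g_{\mathrm p},g_{\mathrm r}'\rangle=\langle g_{\mathrm r},g_{\mathrm p}\rangle-\omega\|g_{\mathrm p}\|^2=(c-\min(0,c))\|g_{\mathrm p}\|^2=\max(0,c)\|g_{\mathrm p}\|^2\ge0 .
\]
We split into two cases:
\begin{itemize}
\item if $c\ge0$, then $\omega=0$ and $g_{\mathrm r}'=g_{\mathrm r}$;
\item if $c<0$, then $\langle g_{\mathrm p},g_{\mathrm r}'\rangle=0$.
\end{itemize}
Since $\alpha\ge0$, it follows that $\langle g_{\mathrm p},g\rangle=\|g_{\mathrm p}\|^2+\alpha\langle g_{\mathrm p},g_{\mathrm r}'\rangle\ge\|g_{\mathrm p}\|^2$. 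This step also yields the half-space projection remark, though the proposition does not need it.

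\textbf{Step 2 (the Taylor bound).} The batch and advantages are fixed, so $\mathcal L_{\mathrm p}$ is a deterministic function of $\theta$, and $g$ is a fixed vector evaluated at the current $\theta$. Since $\nabla\mathcal L_{\mathrm p}$ is locally Lipschitz, it has some constant $L$ on a closed ball $B$ around $\theta$. For $\eta$ small enough that $\theta-\eta g\in B$, the fundamental theorem of calculus along the segment gives the standard descent-lemma bound
\[
\mathcal L_{\mathrm p}(\theta-\eta g)\le\mathcal L_{\mathrm p}(\theta)-\eta\langle g_{\mathrm p},g\rangle+\tfrac{L}{2}\eta^2\|g\|^2 .
\]
Here $g$ is fixed, so $\|g\|^2$ is a constant independent of $\eta$, and the last term is $O(\eta^2)$.

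\textbf{Step 3 (combine).} Substituting Step 1 into this bound gives $\mathcal L_{\mathrm p}(\theta-\eta g)-\mathcal L_{\mathrm p}(\theta)\le-\eta\|g_{\mathrm p}\|^2+\tfrac{L}{2}\|g\|^2\eta^2$, which is the claimed inequality.

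\textbf{Main obstacle.} The only delicate point is interpretive rather than computational. Because the Lipschitz property is only local, the $O(\eta^2)$ constant depends on $\theta$, $g$, and the chosen neighborhood, so the bound holds only for sufficiently small $\eta$.

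Two remarks are worth adding after the proof:
\begin{itemize}
\item \emph{Meaning of ``without reflection''.} The comparison in Section~\ref{sec:protect} refers to the step $g=g_{\mathrm p}$, which has the same first-order decrease $-\eta\|g_{\mathrm p}\|^2$. So ``at least as fast'' is meant to first order.
\item \emph{Loss-reweighting implementation.} The implementation via Equation~\ref{eq:update} is covered by the same argument only when $\widehat\omega$ equals the exact $\omega$. With the top-layer subset estimate, Step 1 can fail, and the proposition does not claim otherwise.
\end{itemize}
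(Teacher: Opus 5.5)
Your proof is correct and follows the same route as the paper's: first establish $\langle g_{\mathrm p},g_{\mathrm r}'\rangle\ge0$ from the exact projection, then expand $\mathcal L_{\mathrm p}$ to first order along $-\eta g$. You make both steps explicit, via the identity $\langle g_{\mathrm p},g_{\mathrm r}'\rangle=\max(0,c)\|g_{\mathrm p}\|^2$ and a descent-lemma bound with a local Lipschitz constant, whereas the paper leaves them implicit.
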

\begin{proof}
The exact projection satisfies $\langle g_{\mathrm p},g_{\mathrm r}'\rangle\ge0$.
A first-order expansion gives
\[
\begin{aligned}
&\mathcal L_{\mathrm p}(\theta-\eta g)-\mathcal L_{\mathrm p}(\theta)\\
&=-\eta\|g_{\mathrm p}\|^2
-\eta\alpha\langle g_{\mathrm p},g_{\mathrm r}'\rangle+O(\eta^2).
\end{aligned}
\]
The second first-order term is non-positive.
\end{proof}

The result follows from the asymmetric projection geometry of A-GEM \citep{chaudhry2019agem} and applies to the protected surrogate on a fixed batch.

\paragraph{Subset coefficient estimation.}
The coefficient $\widehat\omega$ is estimated on the last 16 trainable tensors with at most $10^6$ entries each, namely top-layer normalization weights and biases, which makes the two separate gradients cheap to compute; the final update is a single backward pass of Equation~\ref{eq:update} through all parameters.
With $\widehat\omega$ held constant during differentiation, the full-space inner product is
\[
\langle g_{\mathrm p},g_{\mathrm r}-\widehat\omega g_{\mathrm p}\rangle
=\langle g_{\mathrm p},g_{\mathrm r}\rangle-\widehat\omega\|g_{\mathrm p}\|^2.
\]

\paragraph{Numerical stabilizer.}
For $c=\langle g_{\mathrm r},g_{\mathrm p}\rangle<0$, a positive stabilizer leaves the residual
\begin{equation}
\langle g_{\mathrm p},g_{\mathrm r}'\rangle
=\frac{c\,\varepsilon_{\mathrm{proj}}}
{\|g_{\mathrm p}\|^2+\varepsilon_{\mathrm{proj}}},
\label{eq:residual}
\end{equation}
which vanishes as $\varepsilon_{\mathrm{proj}}\to0$; we use $\varepsilon_{\mathrm{proj}}=10^{-12}$.

\section{Qualitative Examples}\label{app:cases}

Figures~\ref{fig:case_180}--\ref{fig:case_568} show four Video-MME trajectories of the checkpoint reported in Table~\ref{tab:main}, in the format of Figure~\ref{fig:example}. Model outputs are verbatim and shortened with ``\ldots''; bold marks the key sentence. Thumbnails are the frames at the logged timestamps; $\Spre$ thumbnails are first-look frames. Timeline ticks mark first-look and retrieved frames, while the 2\,fps preview spans the whole video. Dashed boxes mark the evidence. We selected the three fixes from the 56 \fix{} cases among the 424 \seek{} questions. Each was kept only if the revision cites the newly retrieved frames, and those frames show what the revision claims. We excluded fixes that were correct only by guessing. The fourth example is one of the 47 \brk{} cases. Each figure also reports the Outcome RL baseline on the same question.

\paragraph{Fixes.}
In Figure~\ref{fig:case_180}, $\Spre$ misreads its own first look: the scoreboard at 44.0\,s lists CHN first, but $\Spre$ reports KOR and describes the skater at 48.5\,s as wearing a ``KOREA'' jersey, while the frame shows a Chinese skater. The gate re-reads the scoreboard, assigns confidence 0.12, and retrieves dense frames around the finish, which confirm CHN. In Figure~\ref{fig:case_766}, the 19-minute video shows the injured athlete only briefly, and no bib is legible in the coarse preview or the first look, so $\Spre$ guesses from the uniform color. The gate states exactly what is missing (``I cannot clearly read a bib number''), and all 12 retrieved frames fall in the 33-second treatment segment, where the bib ``377'' becomes visible. In Figure~\ref{fig:case_713}, $\Spre$ answers from a language prior, the documentary title \emph{Cocaine for Germany}. The gate notes that the video concerns Dutch cartels, and retrieval on the 42-minute video returns the full headline of the article that the first look showed only partially. In Figures~\ref{fig:case_180} and~\ref{fig:case_766}, the Outcome RL baseline instead trusts its wrong initial answer with confidence 0.90 and 0.70.

\paragraph{A failure.}
Figure~\ref{fig:case_568} shows a \brk{}. $\Spre$ infers the Los Angeles--New York route from the map at 44.5\,s and answers correctly (3,224 miles), but this answer rests on prior knowledge rather than an on-screen figure, so the gate correctly reports low confidence. The query ``total miles traveled by the main character'' retrieves a salient graphic that matches the option text: ``254,000 MILES'' over a map labeled ``1916'' and ``freight and passenger rail''. The revision reads the graphic correctly but attributes the 1916 network length to the traveler and switches to A. The perception is correct, but the referent is wrong. The prompt rule ``keep your initial answer unless the new evidence clearly contradicts it'' does not prevent this error, because the retrieved number matches an option verbatim and therefore appears decisive. The Outcome RL baseline retrieves the same frames and makes the same error. A grounded initial answer cannot be broken this way, but this initial answer was correct without on-screen grounding. This suggests that the \brk{} cost in Equation~\ref{eq:utility} is a backstop and does not replace checking whether retrieved evidence answers the question that was asked.

\begin{figure*}[t]
\centering
\includegraphics[width=\textwidth]{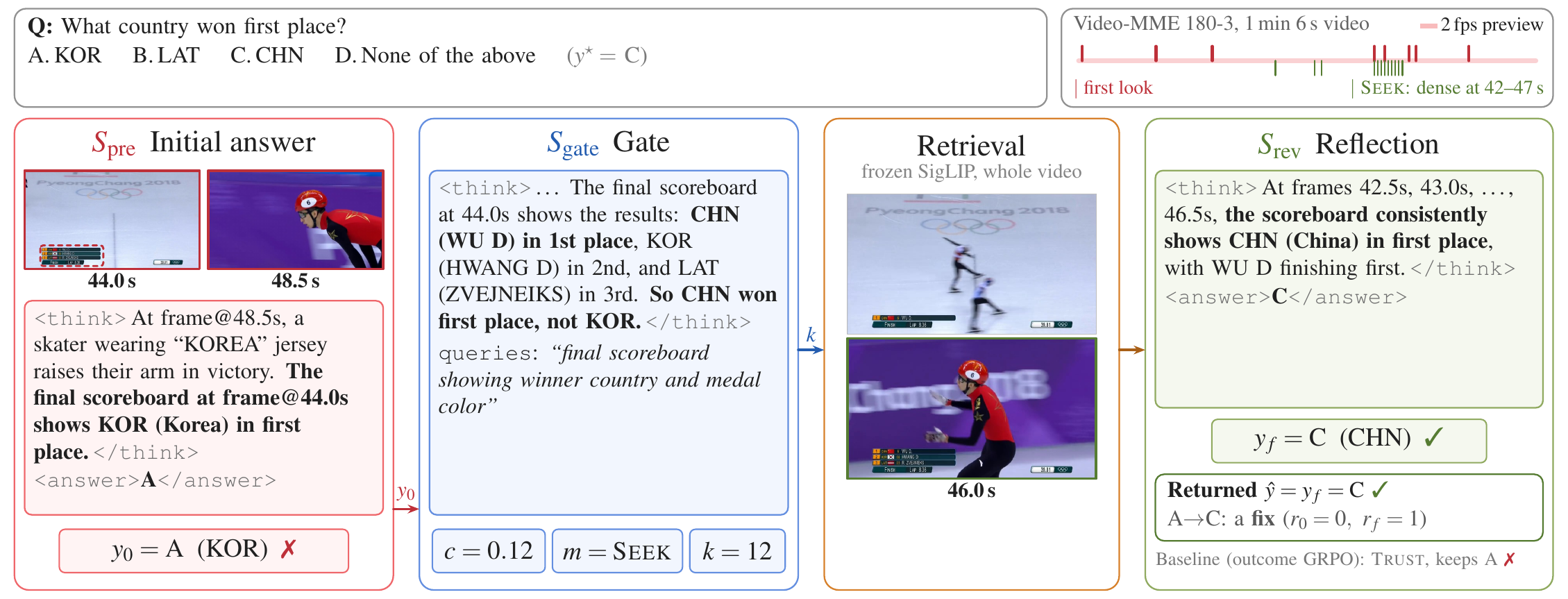}
\caption{\textbf{Fix: the gate catches a misread first-look frame} (Video-MME 180-3, 66\,s video). $\Spre$ reports KOR from a scoreboard that lists CHN first; the gate re-reads it, and dense retrieval around the finish confirms the revision to C.}
\label{fig:case_180}
\end{figure*}

\begin{figure*}[t]
\centering
\includegraphics[width=\textwidth]{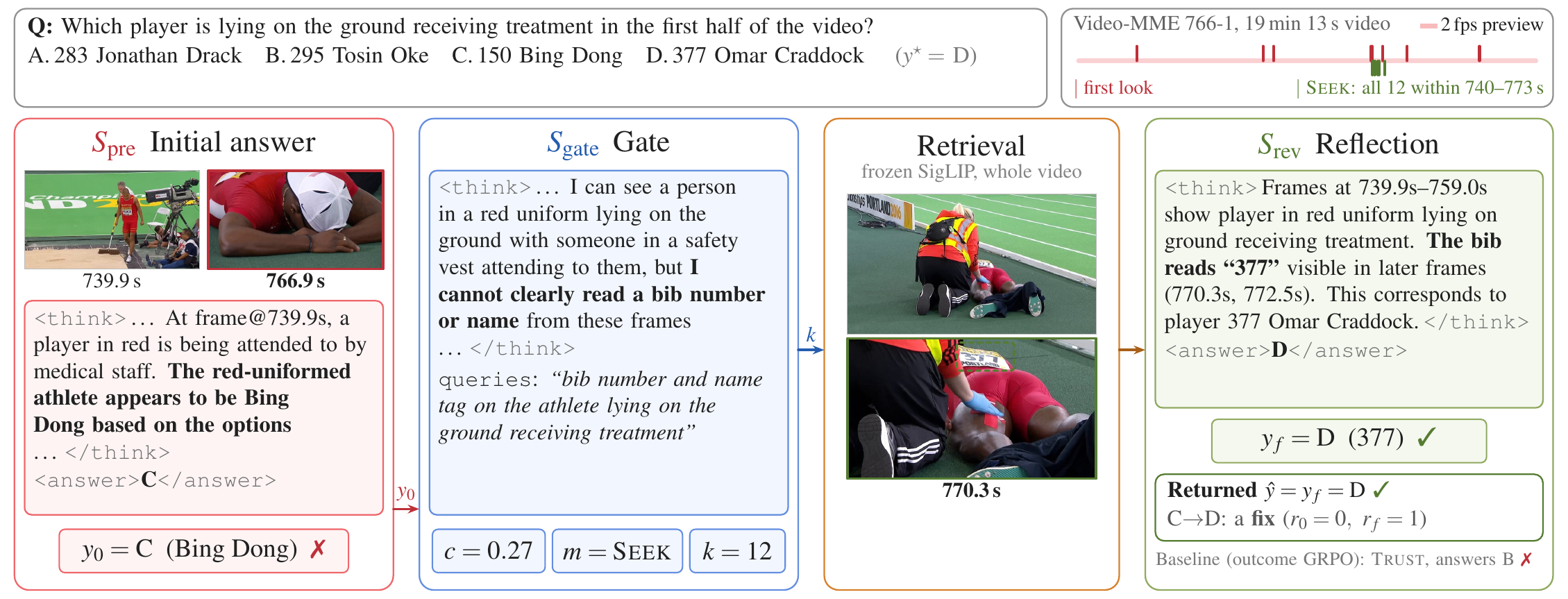}
\caption{\textbf{Fix: localizing a few seconds of evidence in a long video} (Video-MME 766-1, 19\,min). $\Spre$ guesses the athlete from the uniform color; the gate requests the bib, and all retrieved frames fall in the treatment segment, where the bib reads 377.}
\label{fig:case_766}
\end{figure*}

\begin{figure*}[t]
\centering
\includegraphics[width=\textwidth]{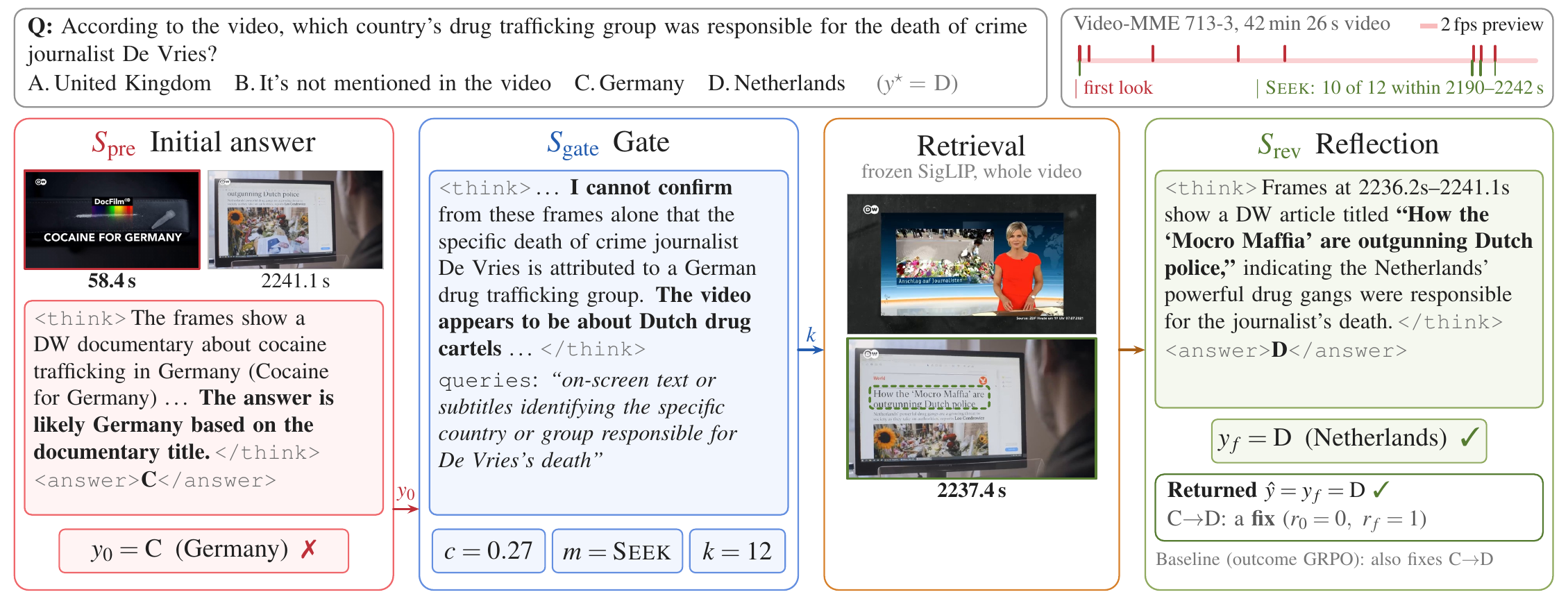}
\caption{\textbf{Fix: replacing a language prior with on-screen text} (Video-MME 713-3, 42\,min). $\Spre$ answers from the documentary title; retrieval returns the full headline attributing the threat to Dutch drug gangs.}
\label{fig:case_713}
\end{figure*}

\begin{figure*}[t]
\centering
\includegraphics[width=\textwidth]{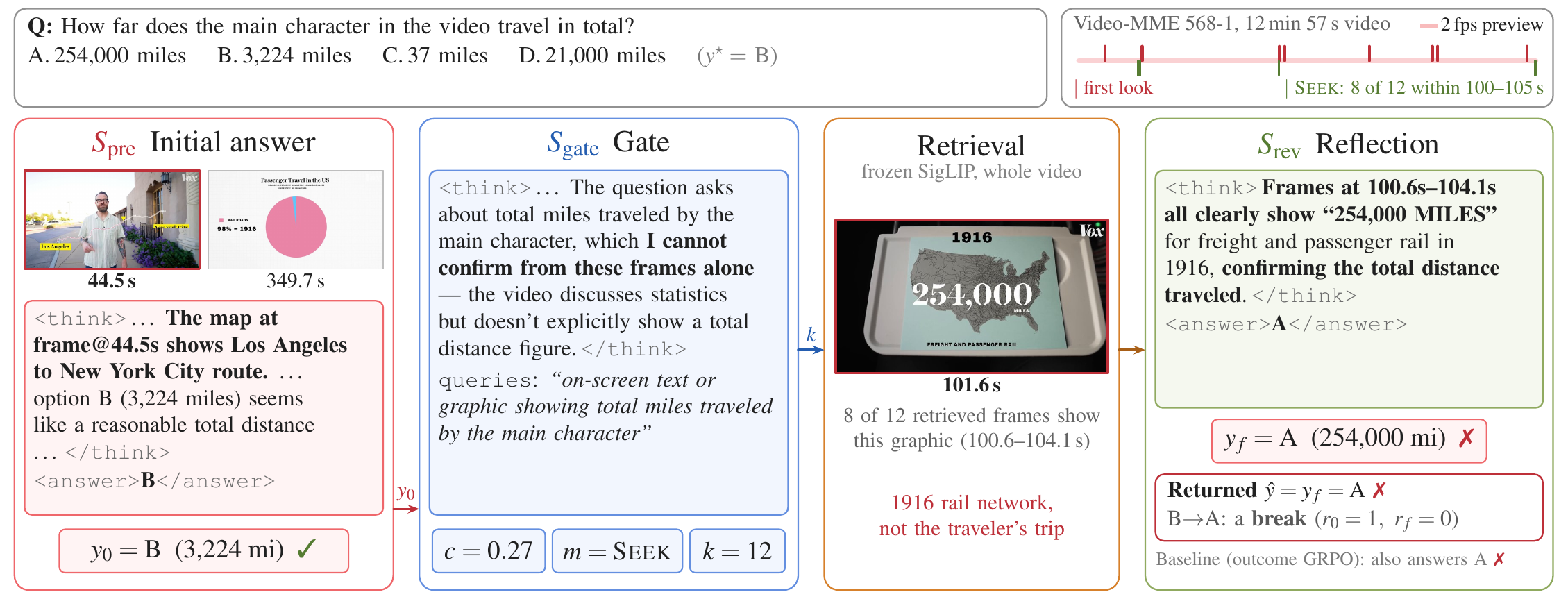}
\caption{\textbf{Failure: a correct initial answer broken by a salient but irrelevant number} (Video-MME 568-1, 13\,min). $\Spre$ is correct but not grounded in an on-screen figure; the query retrieves a 1916 rail-network statistic that matches option A verbatim, and the revision attributes it to the traveler.}
\label{fig:case_568}
\end{figure*}

\end{document}